\newtheorem{thm}{Proposition}
\newcommand*{\affaddr}[1]{#1} 
\newcommand*{\affmark}[1][*]{\textsuperscript{#1}}
\newcommand*{\email}[1]{\texttt{#1}}
\title{Generative Semantic Hashing Enhanced via Boltzmann Machines}
\author{%
Lin Zheng\affmark[1], \quad Qinliang Su\affmark[1,2]\Thanks{ Corresponding author.} , \quad Dinghan Shen\affmark[3], \quad Changyou Chen\affmark[4]\\
\affaddr{\affmark[1]School of Data and Computer Science, Sun Yat-sen University}\\
\affaddr{\affmark[2]Guangdong Key Laboratory of Big Data Analysis and Processing, Guangzhou, China}\\
\affaddr{\affmark[3]Microsoft Dynamics 365 AI}\\
\affaddr{\affmark[4]CSE Department, SUNY at Buffalo}\\
\email{zhenglin6@mail2.sysu.edu.cn, suqliang@mail.sysu.edu.cn}\\
\email{dishen@microsoft.com, changyou@buffalo.edu}\\
}
\date{}
\begin{document}
\maketitle
\begin{abstract}
Generative semantic hashing is a promising technique for large-scale information retrieval thanks to its fast retrieval speed and small memory footprint. For the tractability of training, existing generative-hashing methods mostly assume a factorized  form for the posterior distribution, enforcing independence among the bits of hash codes. From the perspectives of both model representation and code space size, independence is always not the best assumption. In this paper, to introduce correlations among the bits of hash codes, we propose to employ the distribution of Boltzmann machine as the variational posterior. To address the intractability issue of training, we first develop an approximate method to reparameterize the distribution of a Boltzmann machine by augmenting it as a hierarchical concatenation of a Gaussian-like distribution and a Bernoulli distribution. Based on that, an asymptotically-exact lower bound is further derived for the evidence lower bound (ELBO). With these novel techniques, the entire model can be optimized efficiently. Extensive experimental results demonstrate that by effectively modeling correlations among different bits within a hash code, our model can achieve significant performance gains.
\end{abstract}

\section{Introduction}
\label{sec:intro}
Similarity search, also known as nearest-neighbor search, aims to find items that are similar to a query from a large dataset. It plays an important role in modern information retrieval systems and has been used in various applications, ranging from plagiarism analysis \citep{Plagiarized} to content-based multimedia retrieval \citep{contentbasedretrieval}, \emph{etc}. However, looking for nearest neighbors in the Euclidean space is often computationally prohibitive for large-scale datasets (calculating cosine similarity with high-dimensional vectors is computationally-expensive). Semantic hashing circumvents this problem by representing semantically similar documents with compact and {\emph{binary}} codes. Accordingly, similar documents can be retrieved by evaluating the hamming distances of their hash codes much more efficiently.

To obtain similarity-preserving hash codes, extensive efforts have been made to learn hash functions that can preserve the similarity information of original documents in the binary embedding space \citep{sh3,sh1}. Existing methods often require the availability of label information, which is often expensive to obtain in practice. To avoid the use of labels, generative semantic hashing methods have been developed. Specifically, the variational autoencoder (VAE) is first employed for semantic hashing in \citep{vdsh}, and their model is termed VDSH. As a two-step process, the continuous document representations obtained from VAE are directly converted into binary hash codes. To resolve the two-step training problem, Bernoulli priors are leveraged as the prior distribution in NASH \citep{nash}, replacing the continuous Gaussian prior in VDSH. By utilizing straight-through (ST) technique \citep{bengioST}, their model can be trained in an end-to-end manner, while keeping the merits of VDSH. Recently, to further improve the quality of hash codes, mixture priors are investigated in BMSH \citep{bmsh}, while more accurate gradient estimators are studied in Doc2hash \citep{doc2hash}, both under a similar framework as NASH. 

Due to the training-tractability issue, the aforementioned generative hashing methods all assume a factorized variational form for the posterior, {\it e.g.}, independent Gaussian in VDSH and  independent Bernoulli in NASH, BMSH and Doc2hash. This assumption prevents the models from capturing dependencies among the bits of hash codes. Although uncorrelated bits are sometimes preferred in hashing, as reported in \citep{decorr1}, this may not apply to generative semantic hashing. This is due to the fact that the independent assumption could severely limit a model's ability to yield meaningful representations and thereby produce high-quality hash codes. Moreover, as the code length increases (to {\it e.g.} 128 bits), the number of possible codes (or simply the code space) will be too large for a dataset with limited number of data points. As a result, we advocate that correlations among bits of a hash code should be considered properly to restrict the embedding space, and thus enable a model to work effectively under a broad range of code lengths.

To introduce correlations among bits of hash codes, we propose to adopt the Boltzmann-machine (BM) distribution \citep{bm1985} as a variational posterior to capture various complex correlations. One issue with this setting, relative to existing efficient training methods, is the inefficiency brought in training. To address this issue, we first prove that the BM distribution can be augmented as a hierarchical concatenation of a Gaussian-like distribution and a Bernoulli distribution. Using this result, we then show that samples from BM distributions can be well reparameterized easily. To enable efficient learning, an asymptotically-exact lower bound of the standard evidence lower bound (ELBO) is further developed to deal with the notorious problem of the normalization term in Boltzmann machines. With the proposed reparameterization and the new lower bound, our model can be trained efficiently as the previous generative hashing models that preserve no bit correlations. Extensive experiments are conducted to evaluate the performance of the proposed model. It is observed that on all three public datasets considered, the proposed model achieves the best performance among all comparable models. In particular, thanks to the introduced correlations, we observe the performance of the proposed model does not deteriorate as the code length increases. This is surprising and somewhat contrary to what has been observed in other generative hashing models.

\section{Preliminaries}
\paragraph{Generative Semantic Hashing}
In the context of generative semantic hashing, each document is represented by a sequence of words $x=\{w_1, w_2, \cdots, w_{|x|}\}$, where $w_i$ is the $i$-th word and is denoted by a $|V|$-dimensional one-hot vector; $|x|$ and $|V|$ denotes the document size (number of words) and the vocabulary size, respectively. Each document $x$ is modeled by a joint probability: 
\begin{equation}
	p_\theta(x, s)=p_\theta(x|s)p(s),
\end{equation}
where $s$ is a latent variable representing the document's hash code. With the probability $p_\theta(x, s)$ trained on a set of documents, the hash code for a document $x$ can be derived directly from the posterior distribution $p_\theta(s|x)$. In existing works, the likelihood function, or the decoder takes a form $p_\theta(x|s) =\prod_{i=1}^{|x|}p_\theta(w_i|s)$ with
\begin{equation} \label{decoder}
	p_\theta(w_i|s) \triangleq \frac{\exp(s^T E w_i + b_i)}{\sum_{j=1}^{|V|}\exp(s^T E e_j + b_j)},
\end{equation}
where $E \in {\mathbb{R}}^{m\times |V|}$ is the matrix connecting the latent code $s$ and the one-hot representation of words; and $e_j$ is the one-hot vector with the only `1' locating at the $i$-th position. Documents could be modelled better by using more expressive likelihood functions, {\it e.g.}, deep neural networks, but as explained in \cite{nash}, they are more likely to destroy the crucial distance-keeping property for semantic hashing. Thus, the simple form of \eqref{decoder} is often preferred in generative hashing. As for the prior distribution $p(s)$,  it is often chosen as the standard Gaussian distribution as in VDSH \cite{vdsh}, or the Bernoulli distribution as in NASH and BMSH \cite{nash, bmsh}.

\paragraph{Inference}
Probabilistic models can be trained by maximizing the log-likelihood $\log p_\theta(x)$ with $p_\theta(x) = \int_s{p_\theta(x, s)ds}$. However, due to the intractability of calculating $p_\theta(x)$, we instead optimize its evidence lower bound (ELBO), {\it i.e.}, 
\begin{equation} \label{ELBO}
	{\mathcal{L}} = \mathbb{E}_{q_{\phi}(s | x)}\left[\log \frac{p_{\theta}(x | s)p(s)}{q_{\phi}(s | x)}\right],
\end{equation}
where $q_{\phi}(s|x)$ is the proposed variational posterior parameterized by $\phi$. It can be shown that $\log p_\theta(x) \ge {\mathcal{L}}$ holds for any $q_{\phi}(s | x)$ , and that if $q_{\phi}(s | x)$ is closer to the true posterior $p_\theta(s|x)$, the bound ${\mathcal{L}}$ will be tighter. Training then reduces to maximizing the lower bound ${\mathcal{L}}$ w.r.t. $\theta$ and $\phi$. In VDSH \cite{vdsh}, $q_{\phi}(s | x)$ takes the form of an independent Gaussian distribution
\begin{equation} \label{posterior_Gauss}
	q_{\phi}(s | x)={\mathcal{N}}\left(s|\mu_\phi(x), \mathrm{diag}(\sigma_\phi^2(x))\right),
\end{equation}
where $\mu_\phi(x)$ and $\sigma_\phi(x)$ are two vector-valued functions parameterized by multi-layer perceptrons (MLP) with parameters $\phi$. Later, in NASH and BMSH \cite{nash, bmsh}, $q_{\phi}(s | x)$ is defined as an independent Bernoulli distribution, {\it i.e.},
\begin{equation} \label{posterior_Bernoulli}
	q_{\phi}(s | x)= {\mathrm{Bernoulli}}(g_\phi(x)),
\end{equation}
where $g_\phi(x)$ is also vector-valued function parameterized by a MLP. The value at each dimension represents the probability of being $1$ at that position. The MLP used to parameterize the posterior $q_{\phi}(s|x)$ is also referred to as the encoder network.

One key requirement for efficient end-to-end training of generative hashing method is the availability of reparameterization for the variational distribution $q_{\phi}(s | x)$. For example, when $q_{\phi}(s | x)$ is a Gaussian distribution as in \eqref{posterior_Gauss}, a sample $s$ from it can be efficiently reparameterized as
\begin{equation} \label{reparam_vdsh}
	s = \mu_\phi(x) + \sigma_\phi(x)\cdot \epsilon
\end{equation}
with $\epsilon \sim {\mathcal{N}}(0, I)$. When $q_{\phi}(s | x)$ is a Bernoulli distribution as in \eqref{posterior_Bernoulli}, a sample from it can be reparameterized as
\begin{equation} \label{reparam_nash}
	s = \frac{\mathrm{sign}\left(g_\phi(x) - \epsilon \right) +1 }{2}
\end{equation}
where  $\epsilon \in {\mathbb{R}}^m$ with elements $\epsilon_i \sim {\mathrm{uniform}}(0, 1)$. 
With these reparameterization tricks, the lower bound in \eqref{ELBO} can be estimated by the sample $s$ as 
\begin{equation} \label{ELBO_sample}
	{\mathcal{L}} \approx \log \frac{p_{\theta}(x | s_\phi)p(s_\phi)}{q_{\phi}(s_\phi | x)},
\end{equation}
where $s$ has been denoted as $s_\phi$ to explicitly indicate its dependence on $\phi$. To train these hashing models, the backpropagation algorithm can be employed to estimate the gradient of \eqref{ELBO_sample} w.r.t. $\theta$ and $\phi$ easily. However, it is worth noting that in order to use the reparameterization trick, all existing methods assumed a factorized form for the proposed posterior $q_{\phi}(s|x)$, as shown in \eqref{posterior_Gauss} and \eqref{posterior_Bernoulli}. This suggests that the binary bits in hash codes are independent of each other, which is not the best setting in generative semantic hashing.

\section{Correlation-Enhanced Generative Semantic Hashing}
\label{sec:method}
In this section, we present a scalable and efficient approach to introducing correlations into the bits of hash codes, by using a Boltzmann-machine distribution as the variational posterior with approximate reparameterization. 

\subsection{Boltzmann Machine as the Variational Posterior} \label{sec: reparam}
Many probability distributions defined over binary variables $s\in \{0,1\}^m$ are able to capture the dependencies. Among them, the most famous one should be the Boltzmann-machine distribution \citep{bm1985}, which takes the following form:
\begin{equation}\label{boltzmann_machine}
b(s) = \frac{1}{Z} e^{\frac{1}{2}s^T \Sigma s + \mu^Ts} ,
\end{equation}
where $\Sigma \in {\mathbb{R}}^{m\times m}$ and $\mu \in {\mathbb{R}}^m$ are the distribution parameters; and $Z \triangleq \sum_{s}e^{\frac{1}{2}s^T \Sigma s + \mu^Ts}$ is the normalization constant. The Boltzmann-machine distribution can be adopted to model correlations among the bits of a hash code. Specifically, by restricting the posterior to the Boltzmann form
\begin{equation} \label{boltzmann_posterior}
	q_\phi(s|x) = \frac{1}{Z_\phi} e^{-E_\phi(s)}
\end{equation}
and substituting it into the lower bound of \eqref{ELBO}, we can write the lower bound as:
\begin{equation} \label{ELBO_boltzmann}
	{\mathcal{L}} = {\mathbb{E}}_{q_\phi(s|x)} \!\! \left[\log \! \frac{p_{\theta}(x | s)p(s)}{e^{-E_\phi(s)}}\right]  + \log Z_\phi,
\end{equation}
where $E_\phi(s)\triangleq - \frac{1}{2}s^T \Sigma_\phi(x) s - \mu^T_\phi(x) s$; and $\Sigma_\phi(x)$ and $\mu_\phi(x)$ are functions parameterized by the encoder network with parameters $\phi$ and $x$ as input. One problem with such modeling is that the expectation term ${\mathbb{E}}_{q_\phi(s|x)}[\cdot]$ in \eqref{ELBO_boltzmann} cannot be expressed in a closed form due to the complexity of $q_\phi(s|x)$. Consequently, one cannot directly optimize the lower bound ${\mathcal{L}}$ w.r.t. $\theta$ and $\phi$. 

\subsection{Reparameterization}\label{ssec:3-2}
An alternative way is to approximate the expectation term by using the reparameterized form of a sample $s$ from $q_\phi(s|x)$, as was done in the previous {\emph{uncorrelated}} generative hashing models (see \eqref{reparam_vdsh} and \eqref{reparam_nash}). Compared to existing simple variational distributions, there is no existing work on how to reparameterize the complicated Boltzmann-machine distribution. To this end, we first show that the Boltzmann-machine distribution can be equivalently written as the composition of an approximate correlated Gaussian distribution and a Bernoulli distribution. 

\begin{thm}\label{prop1}
A Boltzmann-machine distribution $b(s) = \frac{1}{Z} e^{\frac{1}{2}s^T \Sigma s + \mu^T s}$ with $\Sigma \succ 0$ can be equivalently expressed as the composition of two distributions, that is,
\begin{equation} \label{marginal_posterior}
	b(s) = \int{p(s|r)p(r) dr},
\end{equation}
where $p(r) = \frac{1}{Z}\prod_{i=1}^{m}(e^{r_i}+1)\cdot {\mathcal{N}}(r;\mu, \Sigma)$; $p(s|r) = \prod_{i=1}^m p(s_i|r_i)$ with $s_i$ and $r_i$ denoting the $i$-th element of $s$ and $r$; and $p(s_i|r_i) \triangleq {\mathrm{Bernoulli}}(\sigma(r_i))$ with $\sigma(\cdot)$ being the sigmoid function.
\end{thm}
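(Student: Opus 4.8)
The plan is to verify the claimed identity \eqref{marginal_posterior} by carrying out the integral over $r$ on the right-hand side and checking it collapses to $b(s)$. First I would write out the integrand explicitly: since $p(s|r)=\prod_i \sigma(r_i)^{s_i}(1-\sigma(r_i))^{1-s_i}$ and $p(r)=\frac{1}{Z}\prod_i(e^{r_i}+1)\,{\mathcal{N}}(r;\mu,\Sigma)$, the product $p(s|r)p(r)$ should simplify because the awkward normalizing factors $(e^{r_i}+1)$ in $p(r)$ are exactly the reciprocals of the Bernoulli normalizers. Concretely, using $\sigma(r_i)=\tfrac{e^{r_i}}{e^{r_i}+1}$ and $1-\sigma(r_i)=\tfrac{1}{e^{r_i}+1}$, one gets $\sigma(r_i)^{s_i}(1-\sigma(r_i))^{1-s_i}\,(e^{r_i}+1)=e^{r_i s_i}$, so that $p(s|r)p(r)=\frac{1}{Z}\,e^{r^T s}\,{\mathcal{N}}(r;\mu,\Sigma)$.

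Next I would integrate this over $r\in{\mathbb{R}}^m$. The remaining integral $\int e^{r^T s}\,{\mathcal{N}}(r;\mu,\Sigma)\,dr$ is just the moment generating function of a multivariate Gaussian evaluated at $s$, which is the standard Gaussian identity
\begin{equation}
	\int e^{r^T s}\,{\mathcal{N}}(r;\mu,\Sigma)\,dr = e^{\mu^T s + \frac{1}{2}s^T \Sigma s}.
\end{equation}
(This is where $\Sigma\succ 0$ is used: it guarantees the Gaussian is well-defined and the integral converges.) Substituting back, $\int p(s|r)p(r)\,dr = \frac{1}{Z}e^{\frac{1}{2}s^T\Sigma s + \mu^T s} = b(s)$, which is exactly the claim. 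It also follows, by summing over $s\in\{0,1\}^m$, that the stated $Z$ is the correct normalizer for $p(r)$, confirming $p(r)$ is a bona fide (unnormalized-looking but actually normalized) density and hence the decomposition is consistent.

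Honestly, there is no deep obstacle here — the proposition is essentially an exponential-family reparameterization identity and the computation is short. The only points requiring a little care are: (i) confirming the algebraic cancellation of the $(e^{r_i}+1)$ factors handles both cases $s_i=0$ and $s_i=1$ uniformly (it does, via the formula $\sigma(r_i)^{s_i}(1-\sigma(r_i))^{1-s_i}(e^{r_i}+1)=e^{r_i s_i}$); (ii) justifying the interchange of the finite sum over $s$ with the integral over $r$ when checking normalization, which is immediate since the sum is finite and the integrand is nonnegative (Tonelli); and (iii) noting that although $p(r)$ is written with the factor $\frac{1}{Z}\prod_i(e^{r_i}+1)$ multiplying a Gaussian — so it is not itself Gaussian — it is still a valid probability density, and the "Gaussian-like" terminology in the surrounding text refers to this tilted form. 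I would present the argument as the two-line chain above, with a sentence each for the cancellation and the MGF identity.
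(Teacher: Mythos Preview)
Your proposal is correct and is essentially the same argument as the paper's. The paper writes the joint $q(s,r)=q(s|r)q(r)$, arrives at the same simplified form $\tfrac{1}{Z}\,e^{r^\top s}\,{\mathcal{N}}(r;\mu,\Sigma)$, and then completes the square in $r$ to factor it as $q(r|s)\,q(s)$ with $q(r|s)={\mathcal{N}}(r;\Sigma s+\mu,\Sigma)$ and $q(s)=b(s)$; integrating out $r$ then yields $b(s)$. Your use of the Gaussian moment generating function is exactly this completing-the-square computation stated as a one-line identity, so the two proofs coincide in substance.
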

\begin{proof}
See Appendix \ref{appendix:proof_of_proposition_1} for details.
\end{proof}

Based on Proposition \ref{prop1}, we can see that a sample from the Boltzmann-machine distribution $q_\phi(s|x)$ in \eqref{boltzmann_posterior} can be sampled hierarchically as 
\begin{equation}
r\sim q_\phi(r|x) \quad \text{and} \quad s \sim {\mathrm{Bernoulli}}(\sigma(r)), 
\end{equation}
where 
\begin{equation} \label{Equ:q_phi}
q_{\phi}(r|x) \!\!=\!\! \frac{1}{Z}\prod_{i=1}^{m}(e^{r_i}+1)\cdot {\mathcal{N}}(r;\mu_\phi(x), \Sigma_\phi(x))
\end{equation}
and $\sigma(\cdot)$ is applied to its argument element-wise. From the expression of $q_\phi(r|x)$, we can see that for small values of $r_i$, the influence of $(e^{r_i}+1)$ on the overall distribution is negligible, and thus $q_\phi(r|x)$ can be well approximated by the Gaussian distribution ${\mathcal{N}}(r;\mu_\phi(x), \Sigma_\phi(x))$. For relatively large $r_i$, the term  $(e^{r_i}+1)$ will only influence the  distribution mean,  roughly shifting the Gaussian distribution ${\mathcal{N}}(r;\mu_\phi(x), \Sigma_\phi(x))$ by an amount approximately equal to its variance. For problems of interest in this paper, the variances of posterior distribution are often small, hence it is reasonable to approximate samples from $q_{\phi}(r|x)$ by those from ${\mathcal{N}}(r;\mu_\phi(x), \Sigma_\phi(x))$.

With this approximation, we can now draw samples from Boltzmann-machine distribution $q_\phi(s|x)$ in \eqref{boltzmann_posterior} approximately by the two steps below
\begin{align}
	r & \sim {\mathcal{N}}(r; \mu_\phi(x), \Sigma_\phi(x)), \\
	s & \sim {\mathrm{Bernoulli}}(\sigma(r)). \label{s_i_sample}
\end{align}
For the Gaussian sample $r  \sim {\mathcal{N}}(r; \mu_\phi(x), \Sigma_\phi(x))$, similar to \eqref{reparam_vdsh}, it can be reparameterized as
\begin{equation} \label{r_raparam}
	r = \mu_\phi(x) + L_\phi(x) \cdot \epsilon,
\end{equation}
where $L_\phi(x)$ is the Cholesky decomposition matrix of $\Sigma_\phi(x)$ with $\Sigma_\phi(x)=L_\phi(x)L^T_\phi(x)$; and $\epsilon\in {\mathbb{R}}^m$ with $\epsilon \sim {\mathcal{N}}(0, I)$. It should be noted that in practice, we can define the function $L_\phi(x)$ in advance and then obtain $\Sigma_\phi(x)$ as $\Sigma_\phi(x)=L_\phi(x)L^T_\phi(x)$, thus the Cholesky decomposition is not needed.

Given the Gaussian sample $r$, similar to the reparameterization of Bernoulli variables in \eqref{reparam_nash}, we can reparameterize the Bernoulli sample $s \sim {\mathrm{Bernoulli}}(\sigma(r))$ as
$
	s = \frac{{\mathrm{sign}(\sigma(r) - u) + 1}}{2},
$
where $u \in {\mathbb{R}}^m$ with each element $u_i \sim {\mathrm{uniform}}(0, 1)$. By combining the above reparameterizations, a sample from the Boltzmann-machine distribution $q_\phi(s|x)$ can then be approximately reparameterized as
\begin{equation} \label{s_reparam}
	s_\phi = \frac{{\mathrm{sign}\left(\sigma(\mu_\phi(x) \!+ \! L_\phi(x) \cdot \epsilon) \!- \! u\right) \!+ \! 1}}{2},
\end{equation}
where the subscript $\phi$ is to explicitly indicate that the sample $s$ is expressed in terms of $\phi$.

With the reparameterization $s_\phi$, the expectation term in \eqref{ELBO_boltzmann} can be approximated as $\log \! \frac{p_{\theta}(x | s_\phi)p(s_\phi)}{e^{-E_\phi(s_\phi)}}$. Consequently, the gradients of this term w.r.t. both $\theta$ and $\phi$ can be evaluated efficiently by backpropagation, with the only difficulty lying at the non-differentiable function $\mathrm{sign}(\cdot)$ of $s_\phi$ in \eqref{s_reparam}. Many works have been devoted to estimate the gradient involving discrete random variables \citep{bengioST,gumbel1,gumbel2,rebar,relax,arm}. Here, we adopt the simple straight-through (ST) technique \cite{bengioST}, which has been found performing well in many applications. By simply treating the hard threshold function $\mathrm{sign}(\cdot)$ as the identity function, the ST technique estimates the gradient as
\begin{align}\label{grad}
  \frac{\partial s_\phi}{\partial \phi}
  \approx \frac{1}{2}\frac{\partial \left[\sigma(\mu_{\phi}(x) + L_{\phi}(x) \epsilon) - u\right]}{\partial \phi}.
\end{align}
Then, the gradient of the first term in ELBO ${\mathcal{L}}$ w.r.t. $\phi$ can be computed efficiently by backpropagation.

\subsection{An Asymptotically-Exact Lower Bound}
\label{sec:new_bound}
To optimize the ELBO in \eqref{ELBO_boltzmann}, we still need to calculate the gradient of $\log Z_{\phi}$, which is known to be notoriously difficult. A common way is to estimate the gradient $\frac{\partial \log Z_\phi}{\partial \phi}$ by MCMC methods \cite{pcd,logz,su2017probabilistic,su2017unsupervised}, which are computationally expensive and often of high variance. By noticing a special form of the ELBO \eqref{ELBO_boltzmann}, we develop a lower bound for the ELBO ${\mathcal{L}}$, where the $\log Z_{\phi}$ term can be conveniently \textit{cancelled out}. Specifically, we introduce another probability distribution $h(s)$ and lower bound the original ELBO:
\begin{equation}
  \label{newlb}
  {\mathcal{\widetilde L}} =  {\mathcal{L}} - \mathrm{KL}(h(s)||q_{\phi}(s|x)).
\end{equation}
Since $\mathrm{KL}(\cdot)\ge 0$, we have $\widetilde{\mathcal{L}}(\theta, \phi) \le  {\mathcal{L}}$ holds for all $h(s)$, {\it i.e.}, ${\mathcal{\widetilde L}}$ is a  lower bound of ${\mathcal{L}}$, and equals to the ELBO $ {\mathcal{L}}$ when $h(s) = q_\phi(s|x)$. For the choice of $h(s)$, it should be able to reduce the gap between ${\mathcal{{\widetilde L}}}$ and ${\mathcal{L}}$ as much as possible, while ensuring that the optimization is tractable. Balancing on the two sides, a mixture distribution is used
\begin{equation}\label{sk}
  h_k(s)  = \frac{1}{k}\sum_{i=1}^k p(s|r^{(i)}),
\end{equation}
where $k$ denotes the number of components; $p(s|r^{(i)})$ is the multivariate Bernoulli distribution and $r^{(i)}$ is the $i$-th sample drawn from $q_\phi(r|x)$ as defined in \eqref{Equ:q_phi}.
By substituting $h_k(s)$ into \eqref{newlb} and taking the expectation w.r.t. $r^{(i)}$, we have 
\begin{align} \label{L_k}
	{\mathcal{\widetilde L}}_k \!\! \triangleq \! {\mathcal{L}} \!- \! \mathbb{E}_{q_\phi(r^{(1\cdots k)}|x) } \!\! \left[\mathrm{KL}\! \left(h_k(s) || q_{\phi}(s|x)\right)\right]
\end{align}
where $q_\phi(r^{(1\cdots, k)}|x ) = \prod_{i=1}^k q_\phi(r^{(i)}|x)$.
It can be proved that the bound ${\mathcal{\widetilde L}}_k$ gradually approaches the ELBO ${\mathcal{L}}$ as $k$ increases, and finally equals to it as $k\to \infty$. Specifically, we have
\begin{thm} \label{prop_lowerbound}
For any integer $k$, the lower bound ${\mathcal{\widetilde L}}_k$ of the ELBO satisfies the conditions: 1) ${\mathcal{\widetilde L}}_{k+1} \geq {\mathcal{\widetilde L}}_k$; 2) $\lim_{k \to \infty}{\mathcal{\widetilde L}}_k = {\mathcal{L}}$.
\end{thm}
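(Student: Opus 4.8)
The plan is to work with the non-negative gap $D_k \triangleq \mathbb{E}_{q_\phi(r^{(1\cdots k)}|x)}\!\left[\mathrm{KL}\!\left(h_k(s)\,\|\,q_\phi(s|x)\right)\right]$, so that by \eqref{L_k} we have $\widetilde{\mathcal{L}}_k = {\mathcal{L}} - D_k$; then claims 1) and 2) are exactly $D_{k+1}\le D_k$ and $D_k\to 0$. The key structural fact I would use at the outset is that Proposition~\ref{prop1}, applied with parameters $\mu_\phi(x),\Sigma_\phi(x)$, gives $q_\phi(s|x)=\int p(s|r)\,q_\phi(r|x)\,dr=\mathbb{E}_{q_\phi(r|x)}[p(s|r)]$, so that $h_k(s)$ is precisely a $k$-sample Monte-Carlo estimate of $q_\phi(s|x)$. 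With this in hand the monotonicity argument mirrors the standard proof that importance-weighted variational bounds improve with the number of samples.

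\textbf{Monotonicity.} I would write $h_{k+1}(s)$ as the uniform average of its $k+1$ ``leave-one-out'' sub-mixtures $h_k^{(-j)}(s)\triangleq\frac1k\sum_{i\ne j}p(s|r^{(i)})$, i.e. $h_{k+1}(s)=\frac{1}{k+1}\sum_{j=1}^{k+1}h_k^{(-j)}(s)$; this is just the counting identity that each $p(s|r^{(i)})$ occurs in exactly $k$ of the $k+1$ sub-mixtures. Since $p\mapsto\mathrm{KL}(p\,\|\,q)$ is convex, $\mathrm{KL}\!\left(h_{k+1}(s)\,\|\,q_\phi(s|x)\right)\le\frac{1}{k+1}\sum_{j=1}^{k+1}\mathrm{KL}\!\left(h_k^{(-j)}(s)\,\|\,q_\phi(s|x)\right)$. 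Taking the expectation over $r^{(1)},\dots,r^{(k+1)}\stackrel{\text{iid}}{\sim}q_\phi(r|x)$ and using exchangeability — each $h_k^{(-j)}$ is a mixture of $k$ i.i.d. draws, so its expected KL equals $D_k$ — gives $D_{k+1}\le D_k$, hence $\widetilde{\mathcal{L}}_{k+1}\ge\widetilde{\mathcal{L}}_k$.

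\textbf{Convergence.} Here I would argue pointwise-then-dominated. For each fixed $s\in\{0,1\}^m$, $h_k(s)=\frac1k\sum_i p(s|r^{(i)})$ is an average of i.i.d. random variables bounded in $[0,1]$, so by the strong law of large numbers $h_k(s)\to q_\phi(s|x)$ almost surely; since $\{0,1\}^m$ is finite and $q_\phi(s|x)=\frac{1}{Z_\phi}e^{-E_\phi(s)}>0$, continuity of $t\mapsto t\log\!\big(t/q_\phi(s|x)\big)$ gives $\mathrm{KL}\!\left(h_k(s)\,\|\,q_\phi(s|x)\right)\to0$ almost surely. To move this inside the expectation I would bound the integrand uniformly: $\mathrm{KL}(h_k\,\|\,q_\phi)=\sum_s h_k(s)\log h_k(s)-\sum_s h_k(s)\log q_\phi(s|x)\le 0-\log\!\big(\min_s q_\phi(s|x)\big)<\infty$, while $\mathrm{KL}\ge 0$ always; bounded convergence then yields $D_k\to0$, i.e. $\widetilde{\mathcal{L}}_k\to{\mathcal{L}}$.

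The routine ingredients — the counting identity, convexity of KL, the SLLN — are standard, so the step that needs the most care is the interchange of limit and expectation in part 2); this is exactly where the uniform bound through $\min_s q_\phi(s|x)>0$ does the work, and it is worth noting that this bound is finite precisely because the Boltzmann posterior assigns strictly positive mass to every code.
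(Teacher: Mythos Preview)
Your proof is correct, but your route to monotonicity differs from the paper's. You use the leave-one-out decomposition $h_{k+1}=\frac{1}{k+1}\sum_j h_k^{(-j)}$ together with convexity of $p\mapsto\mathrm{KL}(p\,\|\,q)$ and exchangeability to get $D_{k+1}\le D_k$; this is the standard importance-weighted-bound argument and gives an inequality directly. The paper instead manipulates the entropy terms via the symmetry identity $\mathbb{E}_{r^{1..k+1}}\mathbb{E}_{h_{k+1}(s)}\log h_{k+1}(s)=\mathbb{E}_{r^{1..k+1}}\mathbb{E}_{h_k(s)}\log h_{k+1}(s)$ to obtain the \emph{equality} $\widetilde{\mathcal{L}}_{k+1}-\widetilde{\mathcal{L}}_k=\mathbb{E}_{r^{1..k+1}}\!\left[\mathrm{KL}(h_k\,\|\,h_{k+1})\right]$, from which non-negativity is immediate. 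Your argument is slightly more elementary (it needs only convexity, not the cross-entropy identity); the paper's argument is sharper in that it identifies the gap exactly as an expected KL between consecutive mixtures. For part~2) both proofs invoke the strong law of large numbers, but you are more careful: the paper simply asserts that $h_k(s)\to q_\phi(s|x)$ a.s.\ forces the expected KL to vanish, whereas you supply the missing uniform bound $\mathrm{KL}(h_k\,\|\,q_\phi)\le -\log\min_s q_\phi(s|x)<\infty$ (using finiteness of $\{0,1\}^m$ and strict positivity of the Boltzmann posterior) to justify the interchange of limit and expectation.
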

\begin{proof}
	See Appendix \ref{appendix:proofofpos2}  for details.
\end{proof}

By substituting ${\mathcal{L}}$ in \eqref{ELBO_boltzmann} and $h_k(s)$ in \eqref{sk} into \eqref{L_k}, the bound can be further written as
\begin{align} \label{newlb_noZ}
	{\mathcal{{\widetilde L}}}_k & =  {\mathbb{E}}_{q_\phi(s|x)} \!\! \left[\log \! \frac{p_{\theta}(x | s)p(s)}{e^{-E_\phi(s)}}\right] \nonumber \\
	 & \;\;\; -  {\mathbb{E}}_{q_\phi(r^{(1\cdots k)}|x)} \!\! \left[ {\mathbb{E}}_{h_k(s)} \!\! \left[\! \log \! \frac{h_k(s)}{e^{-E_\phi(s)}} \!\right] \right],
\end{align}
where the $\log Z_\phi$ term is cancelled out since it appears in both terms but has opposite signs. For the first term in \eqref{newlb_noZ}, as discussed at the end of Section \ref{sec: reparam}, it can be approximated as $\log \frac{p_{\theta}(x | s_\phi)p(s_\phi)}{e^{-E_\phi(s_\phi)}}$. For the second term,  each sample $r^{(i)}$ for $i=1, \cdots, k$ can be approximately reparameterized like that in \eqref{r_raparam}. Given the $r^{(i)}$ for $i=1, \cdots, k$,  samples from $h_k(s)$ can also be reparameterized in a similar way as that for Bernoulli distributions in \eqref{reparam_nash}. Thus, samples drawn from $r^{(1\cdots k)} \sim q_\phi(r^{(1\cdots k)}|x)$ and $s \sim h_k(s)$ are also reparameterizable, as detailed in Appendix \ref{appendix:mixturereparam}. By denoting this reparametrized sample as $\tilde s_\phi$, we can approximate the second term in \eqref{newlb_noZ} as $\log \frac{h_k(\tilde s_\phi)}{e^{-E_\phi(\tilde s_\phi)}} $. Thus the lower bound \eqref{newlb_noZ} becomes
\begin{equation}
	{\mathcal{\widetilde L}}_k \approx \log \frac{p_{\theta}(x | s_\phi)p(s_\phi)}{e^{-E_\phi(s_\phi)}} - \log \frac{h_k(\tilde s_\phi)}{e^{-E_\phi(\tilde s_\phi)}}.
\end{equation}
With the discrete gradient estimation techniques like the ST method, the gradient of ${\mathcal{\widetilde L}}_k$ w.r.t. $\theta$ and $\phi$ can then be evaluated efficiently by backpropagation. Proposition \ref{prop_lowerbound} indicates that the exact ${\mathcal{\widetilde L}}_k$ gets closer to the ELBO as $k$ increases, so better bound can be expected for the approximated ${\mathcal{\widetilde L}}_k$ as well when $k$ increases. In practice, a moderate value of $k$ is found to be sufficient to deliver a good performance.

\subsection{Low-Rank Perturbation for the Covariance Matrix}
\label{ssec:lowrank}
In the reparameterization of a Gaussian sample, $r_\phi = \mu_\phi(x) + L_\phi(x) \cdot \epsilon$ in \eqref{r_raparam}, a $m\times m$ matrix $L_\phi(x)$ is required, with $m$ denoting the length of hash codes. The elements of $L_\phi(x)$ are often designed as the outputs of neural networks parameterized by $\phi$. Therefore, if $m$ is large, the number of neural network outputs will be too large. To overcome this issue, a more parameter-efficient strategy called {\it Low-Rank Perturbation} is employed, which restricts covariance matrix to the form
\begin{equation}\label{lr}
  \Sigma = D + UU^{\top},
\end{equation}
where $D$ is a diagonal matrix with positive entries and $U=[u_1, u_2, \cdots u_v]$ is a low-rank perturbation matrix with $u_{i} \in \mathbb{R}^{m}$ and $v\ll m$. Under this low-rank perturbed $\Sigma$, the Gaussian samples can be reparameterized as
\begin{equation}\label{reparam_lowrank}
    r_\phi = \mu_\phi(x) + D^{1/2}_\phi(x) \cdot \epsilon_1 + U_\phi(x) \cdot \epsilon_2,
\end{equation}
where $\epsilon_1 \sim \mathcal{N}\left(0, I_m\right)$ and $\epsilon_2 \sim \mathcal{N}\left(0, I_v\right)$. We can simply replace \eqref{r_raparam} with the above expression in any place that uses $r$. In this way, the number of neural network outputs can be dramatically reduced from $m^2$ to $mv$.

\section{Related Work}
\label{sec:related}
Semantic Hashing \citep{srbm} is a promising technique for fast approximate similarity search. Locality-Sensitive Hashing, one of the most popular hashing methods \citep{lsh}, projects documents into low-dimensional hash codes in a randomized manner. However, the method does not leverage any information of data, and thus generally performs much worse than those data-dependent methods. Among the data-dependent methods, one of the mainstream methods is supervised hashing, which learns a function that could output similar hash codes for semantically similar documents by making effective use of the label information \citep{sh3,sh1}.

Different from supervised methods, unsupervised hashing pays more attention to the intrinsic structure of data, without making use of the labels. Spectral hashing \citep{spectralHashing}, for instance, learns balanced and uncorrelated hash codes by seeking to preserve a global similarity structure of documents. Self-taught hashing \citep{sth}, on the other hand, focuses more on preserving local similarities among documents and presents a two-stage training procedure to obtain such hash codes. In contrast, to generate high-quality hash codes, iterative quantization \citep{itq} aims to minimize the quantization error, while maximizing the variance of each bit at the same time.

Among the unsupervised hashing methods, the idea of generative semantic hashing has gained much interest in recent years. Under the VAE framework, VDSH \citep{vdsh} was proposed to first learn \textit{continuous} the documents' latent representations, which are then cast into binary codes. While semantic hashing is achieved with generative models nicely, the \textit{two-stage} training procedure is problematic and is prone to result in local optima. To address this issue, NASH \citep{nash} went one step further and presented an integrated framework to enable the \textit{end-to-end} training by using the discrete Bernoulli prior and the ST technique, which is able to estimate the gradient of functions with discrete variables. Since then, various directions have been explored to improve the performance of NASH. \citep{bmsh} proposed to employ the mixture priors to improve the model's capability to distinguish documents from different categories, and thereby improving the quality of hash codes. On the other hand, a more accurate gradient estimator called Gumbel-Softmax \citep{gumbel1,gumbel2} is explored in Doc2hash \citep{doc2hash} to replace the ST estimator in NASH. More recently, to better model the similarities between different documents, \citep{rankingnash} investigated the combination of generative models and ranking schemes to generate hash codes. Different from the aforementioned generative semantic hashing methods, in this paper, we focus on how to incorporate correlations into the bits of hash codes.

\section{Experiments}
\label{sec:ex}
\subsection{Experimental Setup}
\label{ssec:exsetup}
\paragraph{Datasets}
Following previous works, we evaluate our model on three public benchmark datasets:
i) Reuters21578, which consists of 10788 documents with 90 categories; 
ii) 20Newsgroups, which contains 18828 newsgroup posts from 20 different topics;
iii) TMC, which is a collection of 21519 documents categorized into 22 classes.

\paragraph{Training Details}
For the conveniences of comparisons, we use the same network architecture as that in NASH and BMSH. Specifically, 
a 2-layer feed-forward neural network with 500 hidden units and a ReLU activation function
is used as an inference network, which receives the TF-IDF of a document as input and outputs the mean and covariance matrix of the Gaussian random variables $r$. During training, the dropout \citep{dropout} is used to alleviate the overfitting issue, with the keeping probability selected from \{0.8, 0.9\} based on the performance on the validation set. The Adam optimizer \citep{adam} is used to train our model, with the learning rate set to 0.001 initially and then decayed for every 10000 iterations. For all experiments on different datasets and lengths of hash codes, the rank $v$ of matrix $U$ is set to 10 and the number of component $k$ in the distribution $h_k(s)$ is set to 10 consistently, although a systematic ablation study is conducted in Section \ref{ssec:analysis} to investigate their impacts on the final performances.

\paragraph{Baselines}
The following unsupervised semantic hashing baselines are adopted for comparisons:
Locality Sensitive Hashing (LSH) \citep{lsh}, 
Stack Restricted Boltzmann Machines (S-RBM) \citep{srbm}, 
Spectral Hashing (SpH) \citep{spectralHashing}, 
Self-Taught Hashing (STH) \citep{sth},
Variational Deep Semantic Hashing (VDSH) \citep{vdsh},
Neural Architecture for Generative Semantic Hashing (NASH) \citep{nash}, and
Semantic Hashing model with a Bernoulli Mixture prior (BMSH)\citep{bmsh}.

\paragraph{Evaluation Metrics}
The performance of our proposed approach is measured by retrieval precision {\it i.e.}, the ratio of the number of relevant documents to that of retrieved documents. A retrieved document is said to be relevant if its label is the same as that of the query one. Specifically, during the evaluating phase, we first pick out top 100 most similar documents for each query document according to the hamming distances of their hash codes, from which the precision is calculated. The precisions averaged over all query documents are reported as the final performance.

\subsection{Results of Generative Semantic Hashing}
\label{ssec:results}

The retrieval precisions on datasets TMC, Reuters and 20Newsgroups are reported in Tables \ref{tb:tmc}, \ref{tb:Reuters} and \ref{tb:20Newsgroups}, respectively, under different lengths of hash codes. Compared to the generative hashing method NASH without considering correlations, we can see that the proposed method, which introduces correlations among bits by simply employing the distribution of Boltzmann machine as the posterior, performs significantly better on all the three datasets considered. This strongly corroborates the benefits of taking correlations into account when learning the hash codes. From the tables, we can also observe that the proposed model even outperforms the BMSH, an enhanced variant of NASH that employs more complicated mixture distributions as a prior. Since only the simplest prior is used in the proposed model, larger performance gains can be expected if mixture priors are used as in BMSH. Notably, a recent work named RBSH is proposed in \citep{rankingnash}, which improves NASH by specifically ranking the documents according to their similarities. However, since it employs a different data preprocessing technique as the existing works, we cannot include its results for a direct comparison here. Nevertheless, we trained our model on their preprocessed datasets and find that our method still outperforms it. For details about the results, please refer to Appendix \ref{appendix:rbsh}.

Moreover, when examining the retrieval performance of hash codes under different lengths, it is observed that the performance of our proposed method never deteriorates as the code length increases, while other models start to perform poorly after the length of codes reaching a certain level. For the most comparable methods like VDSH, NASH and BMSH, it can be seen that the performance of 128 bits is generally much worse than that of 64 bits. This phenomenon is illustrated more clearly in Figure \ref{fig:1}. This may attribute to the reason that for hash codes without correlations, the number of codes will increase exponentially as the code length increases. Because the code space is too large, the probability of assigning similar items to nearby binary codes may decrease significantly. But for the proposed model, since the bits of hash codes are correlated to each other, the effective number of codes can be determined by the strength of correlations among bits, effectively restricting the size of code space. Therefore, even though the code length increases continually, the performance of our proposed model does not deteriorate.

\begin{table}
	\centering
	\resizebox{\columnwidth}{!}{    
		\begin{tabular}{c||c|c|c|c|c}
			
			\toprule
			{Method}   & {8 bits} & {16 bits} & {32 bits} & {64 bits} & {128 bits} \\
			\hline
			LSH      & 0.4388 & 0.4393  & 0.4514  & 0.4553  & 0.4773   \\
			S-RBM    & 0.4846 & 0.5108  & 0.5166  & 0.5190  & 0.5137   \\
			SpH      & 0.5807 & 0.6055  & 0.6281  & 0.6143  & 0.5891   \\
			STH      & 0.3723 & 0.3947  & 0.4105  & 0.4181  & 0.4123   \\
			VDSH     & 0.4330 & 0.6853  & 0.7108  & 0.4410  & 0.5847   \\
			NASH     & 0.5849 & 0.6573  & 0.6921  & 0.6548  & 0.5998   \\
			BMSH     & n.a.    & 0.7062  & 0.7481  & 0.7519  & 0.7450   \\
			\hline
			Ours     &  \textbf{0.6959} &\textbf{0.7243}&\textbf{0.7534}  & \textbf{0.7606}& \textbf{0.7632}      \\   
			\bottomrule
	\end{tabular}}
	\caption{Precision of the top 100 retrieved documents on \textit{TMC} dataset.}
	\label{tb:tmc}
\end{table}

\begin{table}[t]
	\centering
	\resizebox{\columnwidth}{!}{    
		\begin{tabular}{c||c|c|c|c|c}
			
			\toprule
			{Method}   & {8 bits} & {16 bits} & {32 bits} & {64 bits} & {128 bits} \\
			\hline
			LSH        & 0.2802 & 0.3215  & 0.3862  & 0.4667  & 0.5194   \\  
			S-RBM      & 0.5113 & 0.5740  & 0.6154  & 0.6177  & 0.6452   \\  
			SpH        & 0.6080 & 0.6340  & 0.6513  & 0.6290  & 0.6045   \\  
			STH        & 0.6616 & 0.7351  & 0.7554  & 0.7350  & 0.6986   \\  
			VDSH       & 0.6859 & 0.7165  & 0.7753  & 0.7456  & 0.7318   \\  
			NASH       & 0.7113 & 0.7624  & 0.7993  & 0.7812  & 0.7559   \\  
			BMSH       & n.a.      & 0.7954  & 0.8286  & 0.8226  & 0.7941   \\  
			\hline
			Ours     &  \textbf{0.7589} &\textbf{0.8212}&\textbf{0.8420}  & \textbf{0.8465}& \textbf{0.8482}         \\
			\bottomrule
			
	\end{tabular}}
	\caption{Precision of the top 100 retrieved documents on \textit{Reuters} dataset.}
	\label{tb:Reuters}
	\vspace{-4mm}
\end{table}

\begin{table}[t]
	\centering
	\resizebox{\columnwidth}{!}{    
		\begin{tabular}{c||c|c|c|c|c}
			\toprule
			{Method}   & {8 bits} & {16 bits} & {32 bits} & {64 bits} & {128 bits} \\
			\hline
			LSH            & 0.0578 & 0.0597  & 0.0666  & 0.0770  & 0.0949   \\   
			S-RBM          & 0.0594 & 0.0604  & 0.0533  & 0.0623  & 0.0642   \\   
			SpH            & 0.2545 & 0.3200  & 0.3709  & 0.3196  & 0.2716   \\   
			STH            & 0.3664 & 0.5237  & 0.5860  & 0.5806  & 0.5443   \\   
			VDSH           & 0.3643 & 0.3904  & 0.4327  & 0.1731  & 0.0522   \\   
			NASH           & 0.3786 & 0.5108  & 0.5671  & 0.5071  & 0.4664   \\   
			BMSH           & n.a.      & 0.5812  & 0.6100  & 0.6008  & 0.5802   \\   
			\hline
			Ours     &  \textbf{0.4389} &\textbf{0.5839}&\textbf{0.6183}  & \textbf{0.6279}& \textbf{0.6359}   \\      
			\bottomrule
	\end{tabular}}
	\caption{Precision of the top 100 retrieved documents on \textit{20Newsgroups} dataset.}
	\label{tb:20Newsgroups}
\end{table}

\begin{figure*}[t]
	\begin{center}
		\begin{subfigure}[b]{0.325\textwidth}        
      \centering
      {\includegraphics[scale=0.600]{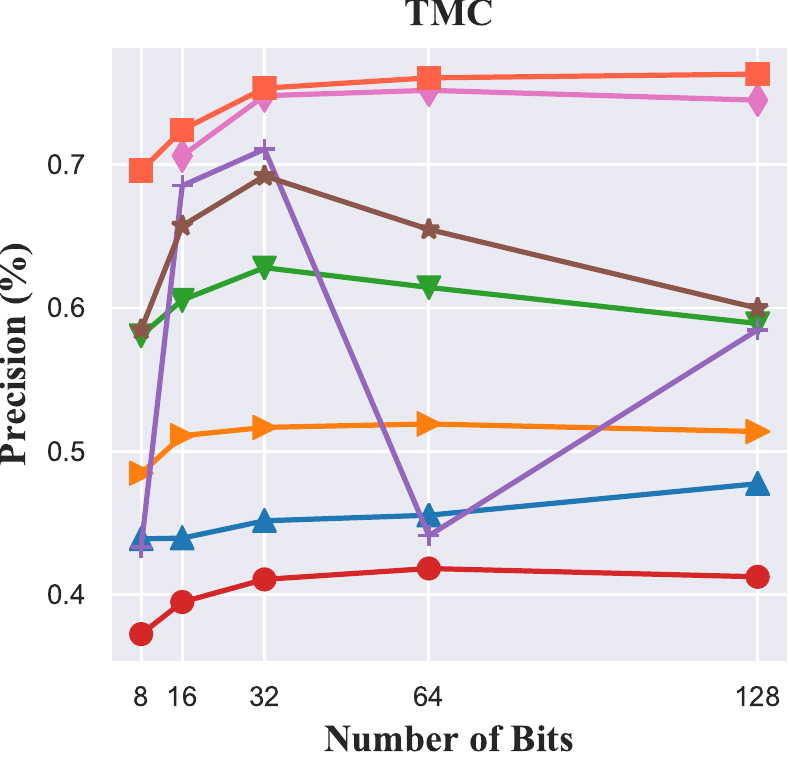}}
		\end{subfigure}
		\begin{subfigure}[b]{0.325\textwidth}        
      \centering
      {\includegraphics[scale=0.600]{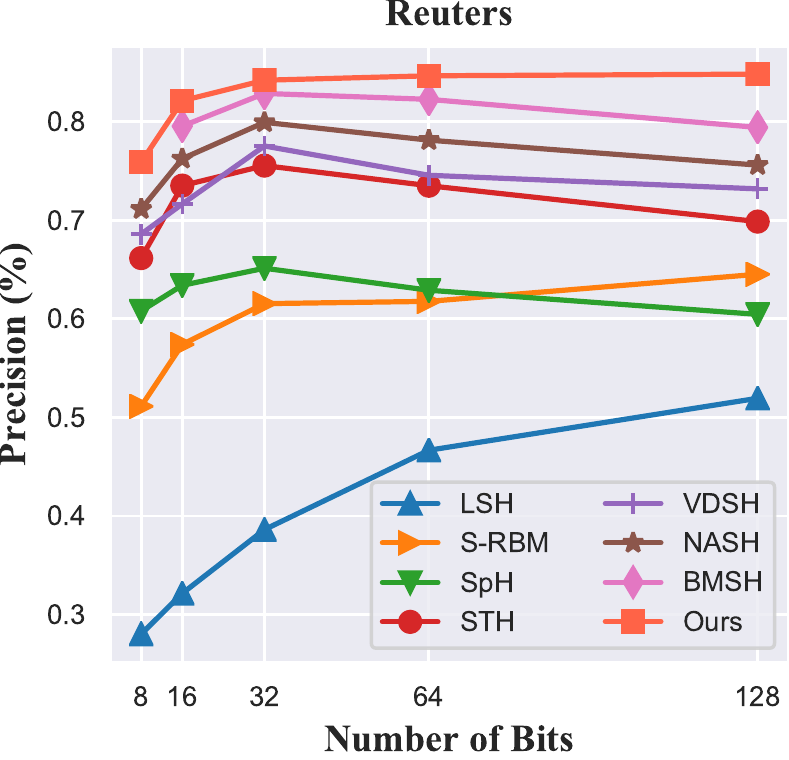}}
		\end{subfigure}
		\begin{subfigure}[b]{0.325\textwidth}        
      \centering
      {\includegraphics[scale=0.600]{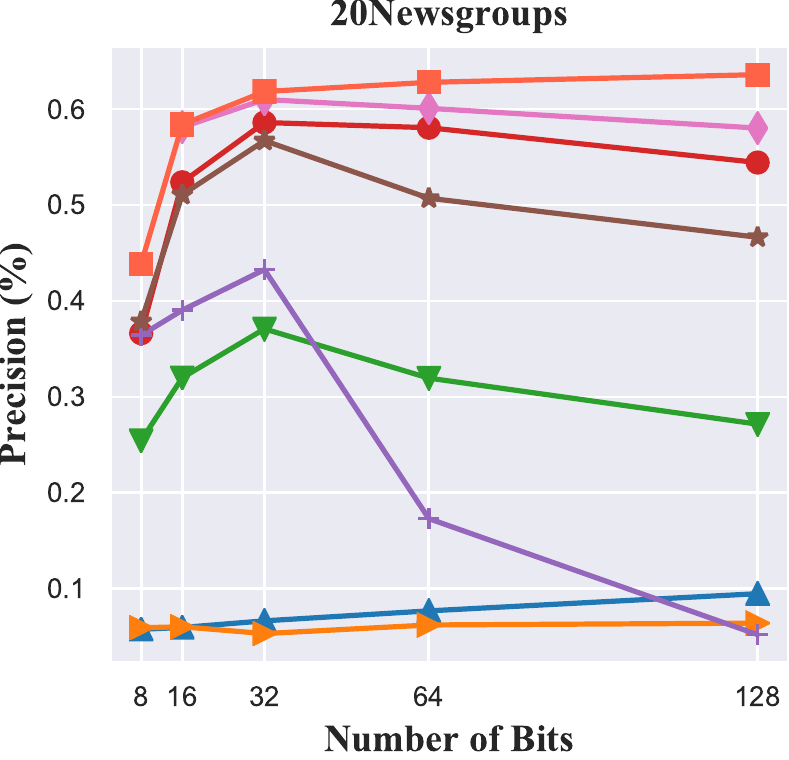}}
		\end{subfigure}
	\end{center}
	\caption{Retrieval precisions of unsupervised hashing methods 
		on three datasets under different code lengths.}
	\label{fig:1}
	
\end{figure*}

\subsection{Empirical Study of Computational Efficiency}
To show the computational efficiency of our proposed method, we also report the average running time per epoch in GPU on {\it TMC} dataset, which is of the largest among the considered ones, in Table \ref{tb:elapsed_time}. As a benchmark, the average training time of vanilla NASH is $2.553$s per epoch. It can be seen that because of to the use of low-rank parameterization of the covariance matrix, the proposed model can be trained almost as efficiently as vanilla NASH, but deliver a much better performance.
\begin{table}
	\centering
	\resizebox{0.8\columnwidth}{!}{    
		\begin{tabular}{c | c || c}
			\toprule
			Value of $v$ & Value of $k$  & Avg. Time (seconds) \\
			\hline
            1  &   1 & 2.934\\
            1  &   5 & 3.124\\
            5  &   1 & 3.137\\
            5  &  5 & 3.353\\
            10 &  5  & 3.403\\
            10 & 10 & 3.768\\
			\bottomrule
	\end{tabular}}
	\caption{Average running time per epoch on {\it TMC} dataset under different values of $v$ and $k$.}
	\label{tb:elapsed_time}
	\vspace{-4mm}
\end{table}

\subsection{Hash Codes Visualization}
\label{ssec:visualization}

\begin{figure*}
  \centering
  \begin{subfigure}[b]{0.32\textwidth} 
    \centering
    {\includegraphics[scale=0.45]{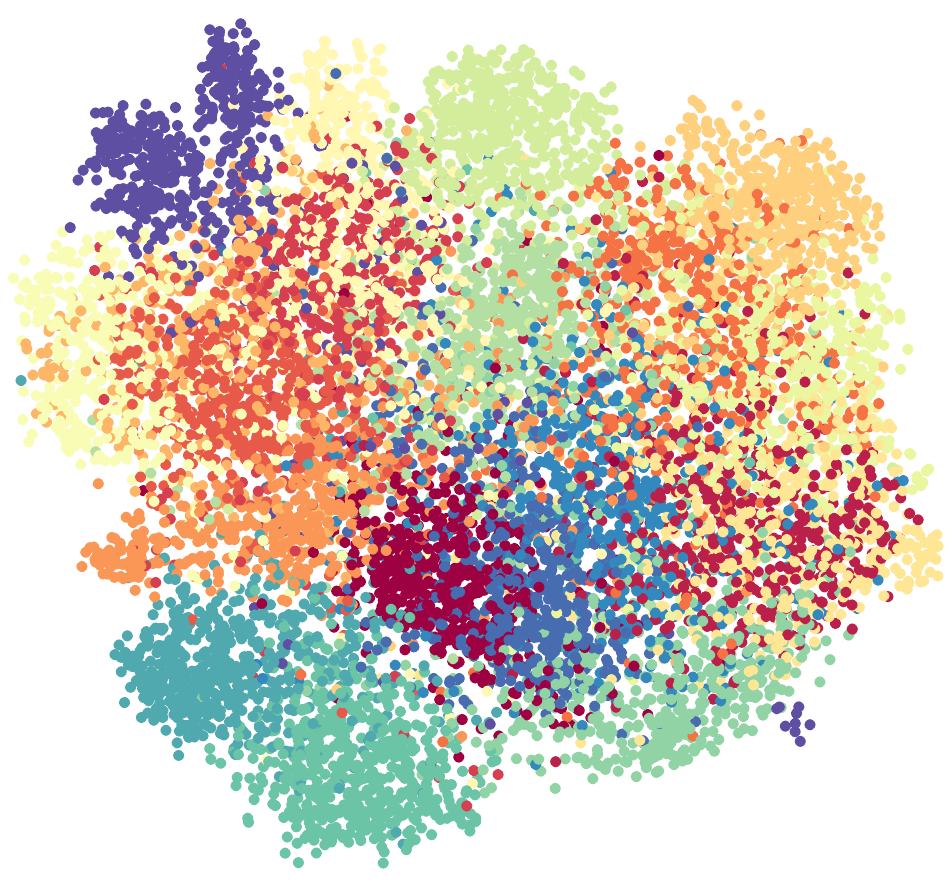}}
      \caption{VDSH}
  \end{subfigure}%
  \begin{subfigure}[b]{0.32\textwidth} 
    \centering
    {\includegraphics[scale=0.45]{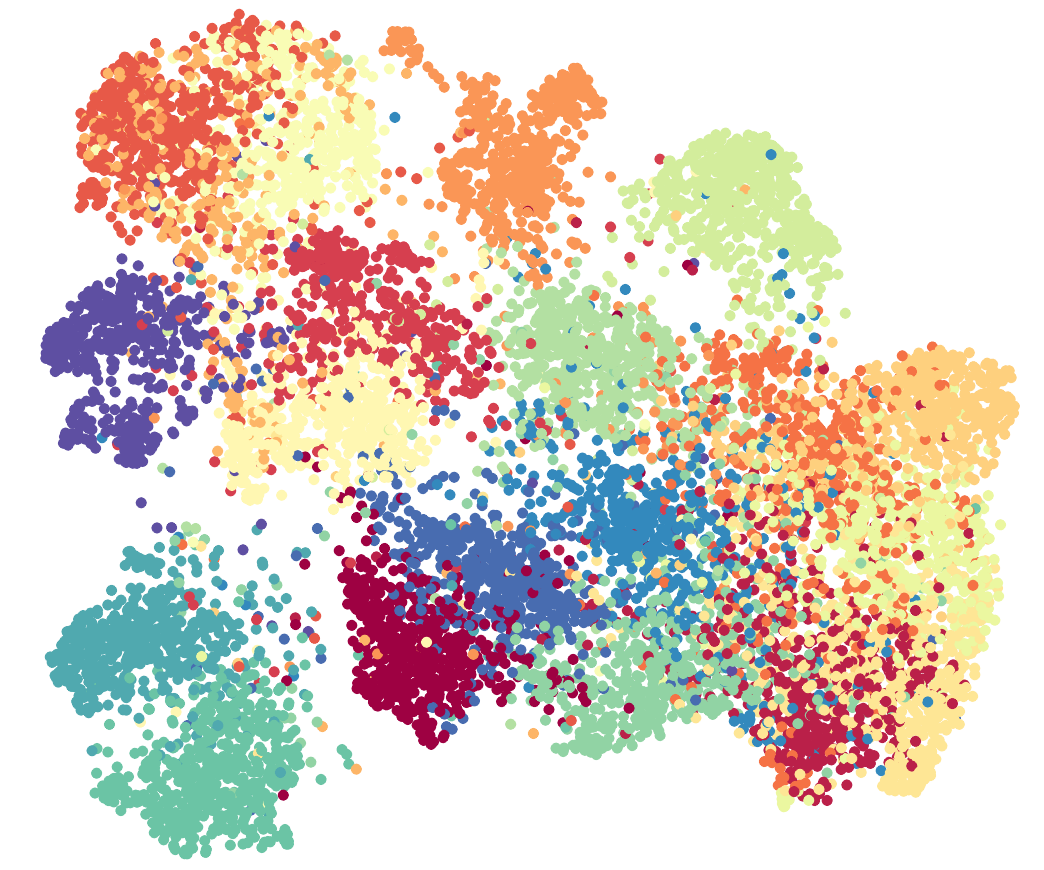}}
      \caption{NASH}
  \end{subfigure}%
  \begin{subfigure}[b]{0.32\textwidth} 
    \centering
    {\includegraphics[scale=0.45]{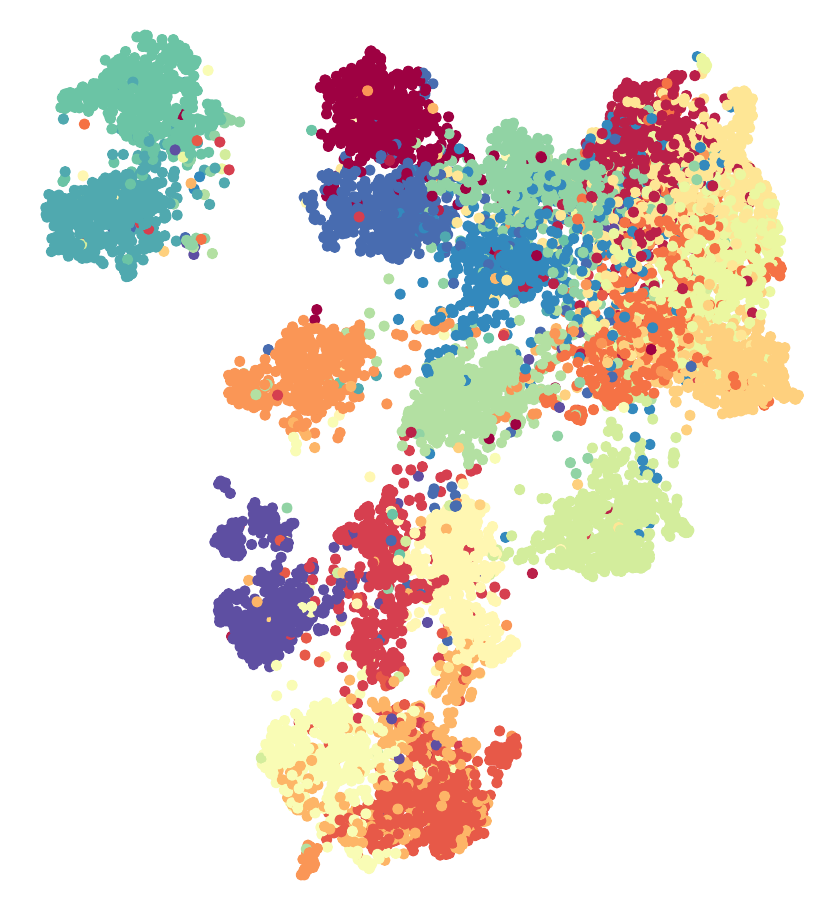}}
    
    \caption{Ours}
  \end{subfigure}
  \caption{Visualization of the 128-bit hash codes learned by VDSH, NASH and our model on 20Newsgroups dataset respectively. Each data point in the figure above denotes a hash code of the corresponding document, and each color represents one category.}
  \label{fig:2d_projections}
  \vspace{-3mm}
  \end{figure*}

To further investigate the capability of different models in generating semantic-preserving binary codes, we project the hash codes produced by VDSH, NASH and our proposed model on 20Newsgroups datasets onto a two-dimensional plane by using the widely adopted UMAP technique \citep{umap} and then visualize them on the two-dimensional planes, as shown in Figure \ref{fig:2d_projections}. It can be seen that the hash codes produced by VDSH are quite mixed for documents from different categories, while those produced by NASH are more distinguishable, consistent with the hypothesis that NASH is able to produce better codes than VDSH thanks to the end-to-end training. From the figure, we can further observe that the hash codes produced by our proposed method are the most distinguishable among all three methods considered, corroborating the benefits of introducing correlations among the bits of hash codes.

\subsection{Analyses on the Impacts of $v$ and $k$}
\label{ssec:analysis}

\paragraph{Ranks $v$}
Low-rank perturbed covariance matrix enables the proposed model to trade-off between  complexity and performance. That is, larger $v$ allows the model to capture more dependencies among latent variables, but the required computational complexity also increases. To investigate its impacts, we evaluate the performance of the 64-bit hash codes obtained from the proposed model under different values of $v$, with the other key parameter $k$ fixed to 10. The result is listed in the left half of Table \ref{tb:mk}. Notably, the proposed model with $v = 0$ is equivalent to NASH since there is not any correlation between the binary random variables. It can be seen that as the number of ranks increases, the retrieval precisions also increase, justifying the hypothesis that  employing the posteriors with correlations can increase the model's representational capacity and thereby improves the hash codes' quality in turn. It is worth noting that the most significant performance improvement is observed between the models with $v=0$ and $v=1$, and then as the value of $v$ continues to increase, the improvement becomes relatively small. This indicates that it is feasible to set the $v$ to a relatively small value to save computational resources while retaining competitive performance.

\paragraph{The number of mixture components $k$}
As stated in Section \ref{sec:new_bound}, increasing the number of components $k$ in the mixture distribution $h_k(s)$ will reduce the gap between the lower bound ${\mathcal{\widetilde L}}_k$ and the ELBO ${\mathcal{L}}$. To investigate the impacts of $k$, the retrieval precisions of the proposed model are evaluated under different values of $k$, while setting the other key parameter $v=10$. It can be seen from the right half of Table \ref{tb:mk} that as the number of components $k$ increases, the retrieval precision also increases gradually, suggesting that a tighter lower bound ${\mathcal{\widetilde L}}_k$ can always indicate better hash codes. Hence, if more mixture components are used, better hash codes can be expected. Due to the sake of complexity, only 10 components are used at most in the experiments.

\begin{table}
	\centering
	\resizebox{\columnwidth}{!}{    
		\begin{tabular}{c |c || c|c}
			\toprule
			Value of $v$   & {Precision } &  Value of $k$ & {Precision}\\
			\hline
			0 & 0.7812 & 1&0.8300\\
			1 & 0.8353 &   3&0.8391\\
			5 & 0.8406  &  5&0.8395\\
			10 & 0.8465 & 10&0.8465\\
			\bottomrule
	\end{tabular}}
	\caption{Left: Retrieval precisions under different values of $v$ with $k$ fixed to be 10 on \textit{Reuters} dataset; Right: Retrieval precision under different values of $k$ with $v$ fixed to be 10 on \textit{Reuters} dataset.}
	\label{tb:mk}
	\vspace{-4mm}
\end{table}

\section{Conclusion}
In this paper, by employing the distribution of Boltzmann machine as the posterior, we show that correlations can be efficiently introduced into the bits. To facilitate training, we first show that the BM distribution can be augmented as a hierarchical concatenation of a Gaussian-like distribution and a Bernoulli distribution. Then, an asymptotically-exact lower bound of ELBO is further developed to tackle the tricky normalization term in Boltzmann machines. Significant performance gains are observed in the experiments after introducing correlations into the bits of hash codes.
\label{sec:conclusion}

\section*{Acknowledgements}
This work is supported by the National Natural Science Foundation of China (NSFC) (No. 61806223, U1711262, U1501252, U1611264, U1711261), National Key R\&D Program of China (No. 2018YFB1004404), and Fundamental Research Funds for the Central Universities (No. 191gjc04). Also, CC appreciates the support from Yahoo! Research.

\bibliography{anthology,acl2020}
\bibliographystyle{acl_natbib}

\clearpage
\appendix

\section{Appendices}
\subsection{Proof of Proposition 1}
\label{appendix:proof_of_proposition_1}
\begin{proof}
	Making use of completing the square technique, the joint distribution of $r$ and $s$ can be decomposed as:
	\begin{align}
		&q(s,r) = q(s|r)q(r) \notag\\
	&= \frac{e^{-\frac{1}{2}(r-\mu)^{\top}\Sigma^{-1}(r-\mu) + r^{\top}s}}{|2\pi\Sigma|^{\frac{1}{2}}Z} \notag\\
	&= \frac{e^{-\frac{1}{2}\left[r-(\Sigma s + \mu)\right]^{\top}\Sigma^{-1}\left[r-(\Sigma s + \mu)\right]}}{|2\pi\Sigma|^{\frac{1}{2}} Z}
	e^{\mu^{\top}s + \frac{1}{2}s^{\top}\Sigma s}\notag\\
	&= q(r|s)q(s),\notag
	\end{align}
	where
	\begin{align}
		q(r|s) &= \mathcal{N}(r;\Sigma s + \mu, \Sigma), \notag\\
		q(s) &= \frac{1}{Z} e^{\mu^{\top}s+\frac{1}{2}s^{\top}\Sigma s}.\notag
	\end{align}
	From above, we show that the marginal distribution $q(s)$ is a Boltzmann machine distribution. 
	\end{proof}

\subsection{Proof of Proposition 2}
\label{appendix:proofofpos2}
We show the following facts about the proposed lower bound of ELBO ${\mathcal{\widetilde L}}_k$.

First, \emph{For any integer $k$, we have ${\mathcal{\widetilde L}}_{k+1} \geq {\mathcal{\widetilde L}}_k$.} For brevity we denote $\mathbb{E}_{q_\phi(r^{(1,\cdots, k)}|x )}$ as $\mathbb{E}_{r^{1..k}}$.
  First, due to the symmetry of indices, the following equality holds:
  \begin{equation}
    \begin{aligned}
      \mathbb{E}_{r^{1..k}} \mathbb{E}_{q(s|r^{(1)})} \!\log h_k(s)\!\! = \!\!\mathbb{E}_{r^{1..k}} \mathbb{E}_{q(s|r^{(i)})} \!\log h_k(s).\notag
    \end{aligned}
  \end{equation} 
  From this, we have
  \begin{equation}
    \begin{aligned}
      &\mathbb{E}_{r^{1..k}} \mathbb{E}_{q(s|r^{(1)})} \log h_k(s) \\
      &= \frac{1}{k}\sum_{i=1}^{k}\mathbb{E}_{r^{1..k}} \mathbb{E}_{q(s|r^{(1)})} \log h_k(s)\\
      &= \frac{1}{k}\sum_{i=1}^{k}\mathbb{E}_{r^{1..k}} \mathbb{E}_{q(s|r^{(i)})} \log h_k(s)\\
      &= \mathbb{E}_{r^{1..k}} \mathbb{E}_{h_k(s)} \log h_k(s),\notag
    \end{aligned}
  \end{equation}
  and
  \begin{equation}
    \begin{aligned}\label{exchangeable}
      &\mathbb{E}_{r^{1..k+1}} \mathbb{E}_{h_{k+1}(s)} \log h_{k+1}(s) \\
      &= \frac{1}{k+1}\sum_{i=1}^{k+1}\mathbb{E}_{r^{1..k+1}} \mathbb{E}_{q(s|r^{(i)})} \log h_{k+1}(s)\\
      &= \mathbb{E}_{r^{1..k+1}} \mathbb{E}_{q(s|r^{(1)})} \log h_{k+1}(s)\\
      &= \frac{1}{k}\sum_{i=1}^{k}\mathbb{E}_{r^{1..k+1}} \mathbb{E}_{q(s|r^{(i)})} \log h_{k+1}(s)\\
      &= \mathbb{E}_{r^{1..k+1}} \mathbb{E}_{h_k(s)} \log h_{k+1}(s).
    \end{aligned}
  \end{equation}
  Applying the equality \eqref{exchangeable} gives us:
   \begin{equation}
     \begin{aligned}
      &{\mathcal{\widetilde L}}_{k+1} - {\mathcal{\widetilde L}}_k \\
      &= \mathbb{E}_{r^{1..k}}\left[\mathrm{KL}(h_k(s) || q(s|x))\right] \\
      &\qquad -\mathbb{E}_{r^{1..k+1}}\left[\mathrm{KL}(h_{k+1}(s) || q(s|x))\right]\\
        &=\mathbb{E}_{r^{1..k+1}}\left[\mathrm{KL}(h_k(s) || q(s|x))\right.\\
      &\left. \qquad - \mathrm{KL}(h_{k+1}(s) || q(s|x))\right] \\
        &=\! \mathbb{E}_{r^{1..k+1}}\!\left[\mathbb{E}_{h_k(s)}\! \log\! h_k(s) \!\!-\!\!\mathbb{E}_{h_{k+1}(s)}\! \log\! h_{k+1}(s)\right] \\
      &= \mathbb{E}_{r^{1..k+1}}\left[\mathbb{E}_{h_k(s)} \!\log h_k(s)\! - \!\mathbb{E}_{h_k(s)} \!\log h_{k+1}(s)\right] \\
      &= \mathbb{E}_{r^{1..k+1}}\left[\mathrm{KL}(h_k(s) || h_{k+1}(s))\right] \geq 0.\notag
     \end{aligned}
   \end{equation}

We now show that $\lim_{k \rightarrow \infty}{\mathcal{\widetilde L}}_k = \mathcal{L}$. According to the strong law of large numbers, $h_k(s) = \frac{1}{k}\sum_{j}^k q(s|r^{(j)})$ converges to  $\mathbb{E}_{q(r|x)} \left[q(s|r)\right] = q(s|x)$ almost surely. We then have
    \begin{equation}
      \lim_{k \rightarrow \infty} \mathbb{E}_{r^{1..k}}\left[\mathrm{KL}(h_k(s) || q(s|x))\right] = 0.\notag
     \end{equation}
Therefore, $\mathcal{\widetilde L}_k$ approaches $\mathcal{L}$ as $k$ approaches infinity.

\subsection{Derivation of reparameterization for $h_k(s)$}
\label{appendix:mixturereparam}
Recall that $h_k(s) = \frac{1}{k}\sum_{j = 1}^k q(s|r_{\phi}^{(j)})$. We show that it can be easily reparameterized. Specifically, we could sample from such a mixture distribution through a two-stage procedure: {\it (i)} choosing a component $c \in\{1,2,\cdots,k\}$ from a uniform discrete distribution, which is then transformed as a $k$-dimensional {\it one-hot} vector $\tilde{c}$; {\it (ii)} drawing a sample from the selected component, {\it i.e.} $q(s|r_{\phi}^{(c)})$. Moreover, we define a matrix $R_{\phi}(x) \in \mathbb{R}^{m \times k}$ with its columns consisting of $r_{\phi}^{(1)}, r_{\phi}^{(2)}, \cdots, r_{\phi}^{(k)}$, each of which can be also reparameterized. In this way, a sample $\tilde  s_\phi$ from the distribution $h_k(s)$ can be simply expressed as
\begin{equation}
  \tilde  s_\phi = \frac{{\mathrm{sign}\left(\sigma(R_{\phi} \tilde{c}) -  u\right) +  1}}{2}\notag
\end{equation}
which can be seen as selecting a sample $r_{\phi}^{(c)}$ and then passing it through a perturbed sigmoid function. 
Therefore, during training, the gradients of $\phi$ are simply back-propagated through the chosen sample $r_{\phi}^{(c)}$.

\subsection{Comparisons between RBSH and our method}
\label{appendix:rbsh}
As discussed before, the main reason that we cited this paper but didn’t compare with it is that the datasets in \citep{rankingnash} are preprocessed differently as ours. Therefore, it is inappropriate to include the performance of the model from \citep{rankingnash} into the comparisons of our paper directly. Our work is a direct extension along the research line of VDSH and NASH. In our experiments, we followed their setups and used the preprocessed datasets that are publicized by them. However, in \citep{rankingnash}, the datasets are preprocessed by themselves. The preprocessing procedure influences the final performance greatly, as observed in the reported results. 

To see how our model performs compared to \citep{rankingnash}, we evaluate our model on the 20Newsgroup and TMC datasets that are preprocessed by the method in \citep{rankingnash}. The results are reported in Table \ref{tb:supp_rbsh}, where RBSH is the model from \citep{rankingnash}. We can see that using the same preprocessed datasets, our model overall performs better than RBSH, especially in the case of long codes. It should be emphasized that the correlation-introducing method proposed in this paper can be used with all existing VAE-based hashing models. In this paper, the base model is NASH, and when they are used together, we see a significant performance improvement. Since the RBSH is also a VAE-based hashing model, the proposed method can also be used with it to introduce correlations into the code bits, and significant improvements can also be expected.

\begin{table}
	\centering
	\resizebox{0.9\columnwidth}{!}{    
		\begin{tabular}{c||c|c||c|c}
        \toprule
         \multirow{2}{*}{\shortstack{Number\\ of Bits}} & \multicolumn{2}{c||}{20Newsgroup} & \multicolumn{2}{c}{TMC}  \\
         \cline{2-5}
         & RBSH  & Ours  & RBSH  & Ours \\
         \hline
         8   & 0.5190  & {\bf 0.5393}  & 0.7620  & {\bf 0.7667} \\
         16  & 0.6087  & {\bf 0.6275}  & 0.7959  & {\bf 0.7975} \\
         32  & 0.6385  & {\bf 0.6647}  & 0.8138  & {\bf 0.8203} \\
         64  & 0.6655  & {\bf 0.6941}  & 0.8224  & {\bf 0.8289} \\
         128 & 0.6668  & {\bf 0.7005}  & 0.8193  & {\bf 0.8324} \\
         \bottomrule
        \end{tabular}}
	\caption{Precision of the top 100 received documents on {\it 20Newsgroup} and {\it TMC} datasets.}
	\label{tb:supp_rbsh}
	\vspace{-4mm}
\end{table}
\end{document}